\icmltitlerunning{ Fourier Policy Gradients }
\DeclareMathOperator{\E}{\mathbb{E}}
\newcommand{\FT}[1]{ \mathcal{F}\left(#1\right)} 
\newcommand{\IFT}[1]{ \mathcal{F}^{-1}\left( #1 \right)} 
\newcommand{\IP}[0]{ {\hat{I} }}
\newcommand{\BO}[0]{ {\bm{\omega} }}
\newcommand{\BA}[0]{ {\bm{a}}}
\newcommand{\BF}[0]{ {\bm{f} }}
\newcommand{\BMU}[0]{ {\bm{\mu} }}
\newcommand{\BSIGMA}[0]{ {\bm{\Sigma} }}
\newcommand{\BL}[0]{ {\bm{l} }}
\newcommand{\BS}[0]{ {\bm{S} }}
\newtheorem{theorem}{Theorem}
\newtheorem{corollary}{Corollary}[theorem]
\newtheorem{lemma}{Lemma}
\theoremstyle{definition}
\providecommand{\customgenericname}{}
\newcommand{\newcustomtheorem}[2]{%
  \newenvironment{#1}[1]
  {%
   \renewcommand\customgenericname{#2}%
   \renewcommand\theinnercustomgeneric{##1}%
   \innercustomgeneric
  }
  {\endinnercustomgeneric}
}
\begin{document}

\twocolumn[
\icmltitle{Fourier Policy Gradients}



\icmlsetsymbol{equal}{*}

\begin{icmlauthorlist}
\icmlauthor{Matthew Fellows}{equal,ox}
\icmlauthor{Kamil Ciosek}{equal,ox}
\icmlauthor{Shimon Whiteson}{ox}
\end{icmlauthorlist}

\icmlaffiliation{ox}{Department of Computer Science, University of Oxford, United Kingdom}

\icmlcorrespondingauthor{Matthew Fellows}{matthew.fellows@cs.ox.ac.uk}

\icmlkeywords{Reinforcement Learning, MPDs, Fourier Series, Control, Policy Gradients, Machine Learning, ICML}

\vskip 0.3in
]



\printAffiliationsAndNotice{\icmlEqualContribution} 

\begin{abstract}
We propose a new way of deriving policy gradient updates for reinforcement learning. Our technique, based on Fourier analysis, recasts integrals that arise with \emph{expected policy gradients} as convolutions and turns them into multiplications. The obtained analytical solutions allow us to capture the low variance benefits of EPG in a broad range of settings. For the critic, we treat trigonometric and radial basis functions, two function families with the universal approximation property. The choice of policy can be almost arbitrary, including mixtures or hybrid continuous-discrete probability distributions. Moreover, we derive a general family of sample-based estimators for stochastic policy gradients, which unifies existing results on sample-based approximation. We believe that this technique has the potential to shape the next generation of policy gradient approaches, powered by analytical results.

\end{abstract}

\section{Introduction}

\emph{Policy gradient} methods, also known as actor-critic methods, are an effective way to perform reinforcement learning in large or continuous action spaces \cite{lillicrap2015continuous, schulman2015trust, schulman2017proximal, wu2017scalable, peters2008reinforcement, spg, reinforce}. Since they adjust the policy in small increments, they do not have to perform expensive optimisations over the action space, in contrast to methods based on value functions, such as $Q$-learning \citep{dqn, alphago, doubleq} or SARSA \cite{van2009theoretical, sutton, sutton1996generalization}. Moreover, they are naturally suited to stochastic policies, which are useful for exploration and necessary to achieve optimality in some settings, e.g., competitive multi-agent systems.

Until recently, policy gradient methods were either restricted to deterministic policies \citep{dpg} or suffered from high variance \citep{spg}. The latter problem is exacerbated in large state spaces, when the number of samples required to reduce the variance of the gradient estimate becomes infeasible for the simple score function estimators on which policy gradient methods typically rely.  The problem also arises when training recurrent neural networks (RNNs) \citep{seq1, seq2, learning_multi_comm} that have to be unrolled over several timesteps, each adding to the overall variance, and in multi-agent settings \citep{coma}, where the actions of other agents introduce a compounding source of variance.

Recently, a new approach called \emph{expected policy gradients} (EPG) \citep{epg, epg-journal} was proposed that eliminates the variance of a stochastic policy gradient by integrating over the actions analytically. However, this requires analytic solutions to the policy gradient integral and the original work addressed only polynomial critics. 

In this paper, we employ techniques from Fourier analysis to derive analytic policy gradient updates for two important families of critics. The first, \emph{radial basis functions} (RBFs), combines the benefits of shallow structure, which makes them tractable, with an impressive empirical track record \citep{buhmann2003radial, carr2001reconstruction}. The second, trigonometric critics, is useful for modelling periodic phenomena. Similarly to polynomial critics \citep{epg-journal}, these function classes are universal, i.e., they can approximate an arbitrary function on a bounded interval. 

Furthermore, to address cases where analytical solutions are infeasible, we provide a general theorem for deriving Monte Carlo estimators that links existing methods using the first and second derivatives of the action-value function, relating it to existing sampling approaches.  

Our technique also enables analytic solutions for new families of policies, extending EPG to any policy that has unbounded support, where it previously required the policy to be in an exponential family. We also develop results for mixture policies and hybrid discrete-continuous policies, which we posit can be useful in multi-agent settings, where having a rich class of policies is important not just for exploration but also for optimality \citep{nisan2007algorithmic}.

Overall, we believe that the techniques developed in this paper can be used to shape the next generation of policy gradient methods suitable for any reasonable MDP and that, powered by analytical results, achieve zero or low variance. Moreover, our methods elucidate the way policy gradients work by explicitly stating the expected update. Finally, while the main contribution of this paper is theoretical, we also provide an empirical evaluation using a periodic critic on a simple turntable problem that demonstrates the practical benefit of using a trigonometric critic.

\section{Background}
\label{sec-bckg}

\emph{Reinforcement learning} (RL) aims to learn optimal behaviour policies for an agent (or many agents) acting in an environment with a scalar reward signal. Formally, we consider a Markov decision process, defined as a tuple $(S,A,R,p,p_0,\gamma)$. An agent has an environmental state $\bm{s} \in S = \mathbb{R}^n$; takes a sequence of actions $\bm{a}_1,\bm{a}_2,...$,  where $\bm{a}_t \in A$; transitions to the next state $\bm{s}' \sim p(\cdot|\bm{s},\bm{a})$ under the state transition distribution $p(\bm{s}' \vert \bm{s}, a)$; and receives a scalar reward $r \in \mathbb{R}$. The agent's initial state $\bm{s}_0$ is distributed as $\bm{s}_0 \sim p_0(\cdot)$.  

The agent samples from the policy $\beta$ to generate actions $a\sim\beta(\cdot|\bm{s})$, giving a trajectory through the environment $\tau=(\bm{s}_0,\bm{a}_0,r_1,\bm{s}_1,\bm{a}_1,r_1,...)$. The definition of the value function is $ V^{\beta}(\bm{s}) = \E_{\tau : \bm{s}_0 = \bm{s}}[\sum_{t}\gamma^t r_t ]$ and action-value function is $Q^{\beta}(\bm{s},\bm{a}) = \E_{\tau : \bm{s}_0 = \bm{s}, a_0 = a}[\sum_{t}\gamma^t r_t]$, where $\gamma \in [0,1)$ is a discount factor. An optimal policy $\beta^\star$ maximises the total return $J=\int_{\bm{s}} V^{\beta^{\star}}(\bm{s}) dp_o(\bm{s})$. 

\subsection{Policy Gradient Methods}
Policy gradient methods seek a locally optimal policy by maintaining a \emph{critic}, learned using a value-based method, and an \emph{actor}, adjusted using a policy gradient update.  

The critic $\hat{Q}$ is learned using variants of SARSA \cite{van2009theoretical, sutton, sutton1996generalization}, with the goal of approximating the true action-value $Q^{\beta}$. Meanwhile, the actor adjusts the policy parameter vector $\theta$ of the policy $\beta_\theta$ with the aim of maximising $J$. For stochastic policies, this is done by following the gradient:
\begin{gather}
\label{eq:policy_objective1} \nabla_{\theta}J =\int_s{\rho(\bm{s}) \underbrace{ \int_a{Q(\bm{s},\bm{a})\nabla_{\theta}\beta_{\Theta}(\bm{a}|\bm{s})}d\bm{a}} _{I_\theta(s)}}
d\bm{s},
\end{gather}
where $\rho(\bm{s})\triangleq\sum_{t=0}^{\infty}\gamma^tp(\bm{s}_t=\bm{s}|\bm{s}_0)$ is the discounted-ergodic occupancy measure.
The outer integral can be approximated by following a trajectory of length $T$ through the environment, yielding:
\begin{gather}
\label{eq:policy_objective2} 
\nabla_{\theta} J = 
\E_{\rho(s)} [ I_{\theta}(\bm{s}) ] \approx \sum_{t=0}^{T-1}\gamma^t \hat{I_{\theta}}(\bm{s}_t),
\end{gather}
where $\hat{I_{\theta}}$ is the integral of \eqref{eq:policy_objective1} but with the critic $\hat{Q}$ in place of the unknown true $Q$-function:
\[
\label{eq:rl_integral_objective}
\hat{I_{\theta}}(\bm{s}_t)=\int_a{\hat{Q}(\bm{s}_t,\bm{a})\nabla_{\theta}\beta_{\Theta}(\bm{a}|\bm{s}_t)}d\bm{a}.
\]

The subscript of $\hat{I}_{\theta}$ denotes the fact that we are differentiating with respect to $\theta$. Now, since $\hat{Q}$, unlike $Q$, does \emph{not} depend on the policy parameters, we can move the differentiation out of the inner integral as follows:
\begin{gather}
\label{eq-iqm}
\hat{I_{\theta}}(\bm{s}_t)= \nabla_{\theta} \underbrace{\int_{a}{\hat{Q}(\bm{s}_t,\bm{a}) d \beta_{\Theta}(\bm{a}|\bm{s}_t)}}_{ E(\bm{s_t}) = \mathbb{E}_{\beta}[\hat{Q}(\bm{s_t}, \cdot)] }. 
\end{gather}
This transformation has two benefits: it allows for easier manipulation of the integral and it also holds for deterministic policies, where $\beta$ is a Dirac-delta measure \cite{dpg, epg, epg-journal}. 

Using \eqref{eq:policy_objective2} directly with an analytic value of $\hat{I}(s_t)$ yields \emph{expected policy gradients} (EPG)\footnote{Called \emph{all-action policy gradient} in an unpublished draft by \citet{sutton2000comparing}.} \citep{epg, epg-journal}, shown in Algorithm \ref{epg-algo}. If instead we add an additional Monte Carlo sampling step:
\[
\label{spg-mc}
\hat{I}_{\theta} \approx \hat{Q}(\bm{s},\bm{a}) \nabla_{\theta} \log \beta(\bm{a} \vert \bm{s}),
\]
we get the original \emph{stochastic policy gradients} \citep{reinforce, spg}. In place of \eqref{spg-mc}, alternative Monte Carlo schemes with better variance properties have also been proposed \citep{baxter2000direct, baxter2001experiments,baxter2001infinite, qprop, kakade2003sample}. If we can compute the integral in \eqref{eq:rl_integral_objective}, then EPG is preferable since it avoids the variance introduced by the Monte Carlo step of  \eqref{spg-mc}.

\begin{algorithm}[tb]
\begin{algorithmic}[1]
\STATE $s \gets s_0$, $t \gets 0$
\STATE initialise optimiser, initialise policy parameters $\theta$
\WHILE{not converged}
\STATE $g_t \gets \gamma^t \IP_\theta(s)$ 
\STATE $\theta \gets  \theta \; + \;  $optimiser.\textsc{update}$(g_t) $
\STATE $a \sim \beta(\cdot, s)$
\STATE $s',r \gets $ environment.\textsc{perform-action}(a)
\STATE $\hat{Q}$.\textsc{update}($s,a,r,s'$)
\STATE $t \gets t + 1$
\STATE $s \gets s'$
\ENDWHILE
\end{algorithmic}
\caption{Expected Policy Gradient}
\label{epg-algo}
\end{algorithm}

This paper considers both methods for solving integrals of the form in \eqref{eq:rl_integral_objective} and Monte Carlo methods that improve on \eqref{spg-mc}  for cases where analytical solutions are not possible. 

Above, we used the symbol $\theta$ to denote a generic policy parameter. Often, the policy is described by its moments (for instance a Gaussian is fully defined by its mean and covariance). To achieve greater flexibility, these immediate parameters are obtained by a complex function approximator, such as a neural network, where the state vector is the input, and parameterised by $\bm{w}$. The total policy gradient for $\bm{w}$ is then obtained by using the chain rule. For example, for a Gaussian we have immediate parameters $ \bm{\mu},\bm{\Sigma}$ and the parameterisation is:
\begin{gather}
\beta(\bm{a} \vert \bm{s}) = \mathcal{N}(\bm{\mu}, \bm{\Sigma}), \\
(\bm{\mu}, \bm{\Sigma}^{1/2}) = \text{net}_{\bm{w}}(\bm{s}),
\end{gather}
where $\text{net}_{\bm{w}}$ is a neural network parameterised by the vector $\bm{w}$. The gradient for some $\bm{w}$ is then:
\begin{gather}
\nabla_{\bm{w}} I_\theta(\bm{s}) = \nabla_{\bm{w}} \bm{\mu} I_{\bm{\mu}} + \nabla_{\bm{w}} \bm{\Sigma}^{1/2} I_{\bm{\Sigma}^{1/2}}.
\end{gather}
For clarity, we only give updates for the immediate parameters (in this case, $\IP_{\bm{\mu}}$ and $\IP_{\bm{\Sigma^{1/2}}}$) in the remainder of the paper, without explicitly mentioning $\bm{w}$. 

\subsection{Fourier Analysis}
\label{sec-fa}
A \emph{convolution} $f * g$ is an operation on two functions that returns another function, defined as:
\[
(f*g)(\bm{x}) \triangleq \int_{x'} f(\bm{x}')g(\bm{x}-\bm{x}')d\bm{x}'.
\]
Convolutions have convenient analytical properties that we use to derive our main result. To make convolutions easy to compute, we seek a transform $\mathcal{F}$ that, when applied to a convolution, yields a simple operation like multiplication, i.e., we want the property:
\begin{gather}
\label{ft-convolution}
\FT{f * g}(\bm{\omega}) = \Big(\FT{f} \FT{g}\Big)(\bm{\omega}).
\end{gather}
We also need the dual property:
\[\FT{fg} = \FT{f} * \FT{g},\]
to ensure symmetry between the space of functions and their transforms.  It turns out that, up to scaling, there is only one transform that meets our needs \citep{alesker2008characterization},  the \emph{Fourier transform}:
\[
\FT{f}(\bm{\omega}) \triangleq \int_{x} f(\bm{x})e^{-i \bm{\omega}^\top \bm{x}}d\bm{x}.
\]
The two sets of parentheses on the lefthand side are required because the Fourier transform $\FT{f}$ is a function, not a scalar, and $\FT{f}(\bm{\omega})$ is the result of evaluating this function on $\bm{\omega}$. The Fourier transform of a probability density function is known as the \textit{characteristic function} of the corresponding distribution. An intuitive interpretation of the Fourier transforms is that it provides a mapping from the action-spatial domain to the frequency domain, $\FT{f(\bm{x})}:\bm{x}\rightarrow\bm{\omega}$, decomposing the function $f$ into its frequency components. Consider, as a simple example, a univariate sinusoidal function, $f(x)=\cos(x\Omega)$. The Fourier transform of $f$ can be easily shown to be $\FT{f}=\pi\delta(\omega-\Omega)+\pi\delta(\omega+\Omega)$ \cite{fourieranalysis}; the Fourier transform has mapped a sinusoid of frequency $\Omega$ in the action domain to a double frequency spike at $\pm \Omega$ in the frequency domain.

The Fourier transform has another related intuitive interpretation as a change of basis in the space of functions. The Fourier basis functions $e^{-i \bm{\omega}^\top \bm{x}}$ make analytical operations convenient in much the same way as a choice of convenient basis in linear algebra makes certain matrix operations easier. Since the basis functions are periodic, the Fourier transform can also be viewed as a decomposition of the original function into cycles. Sometimes these cycles are written explicitly when the complex exponential $e^{-i \bm{\omega}^\top \bm{x}}$ is expressed in polar form, which includes sines and cosines. Indeed, the \emph{Fourier series}, which we briefly discuss in Appendix \ref{fseries}, can be used to prove that any function on a bounded interval can be approximated arbitrarily well with a sum of sufficiently many such trigonometric terms.     
   
The \emph{inverse Fourier transform} is defined as: 
\[
\IFT{g}(\bm{x}) \triangleq \frac{1}{(2\pi)^n} \int_{\omega} g(\bm{\omega}) e^{i \bm{\omega}^\top \bm{x}} d\bm{\omega},
\]
which has the property that, for any function $f$,
\begin{gather}
\label{eq-ft-inv-ft}
\IFT{\FT{f}} = f.
\end{gather}
Thus, we can recover the original function by applying the Fourier and inverse Fourier transforms. Just as the Fourier transform maps from the action domain to the frequency domain, the inverse Fourier transforms provides a mapping from the frequency domain back to the action-spatial domain $\IFT{f(\bm{\omega})}:\bm{\omega}\rightarrow\bm{x}$.
The Fourier transform also turns differentiation into multiplication:
\begin{align}
\label{md-vector-result}i\bm{\omega}\FT{f}=\mathcal{F}(\nabla_{\bm{x}}f(\bm{x})),\\
\label{md-matrix-result}(i\bm{\omega})(i\bm{\omega})^\top\FT{f}=\FT{\nabla^{(2)}_{\bm{x}}f(\bm{x})},
\end{align}
where $\nabla^{(m)}_{\bm{x}}f$ denotes the $m$th order derivative of $f$ w.r.t.\ $\bm{x} \ \ \forall \ \ m\ge0$.

We formalise the $n$-dimensional Fourier transform in \Cref{multi-fourier}, and provide definitions for Fourier transforms of matrix and vector quantities. We also derive the differentiation/multiplication property in $n$-dimensional space.

\section{Main Result}
\label{sec-main-result}
 
In this section, we prove our main result. The motivating factor behind these derivations is that by viewing the inner integral $\hat{I}_\theta$ as a \emph{convolution}, we can analyse our policy gradient in the frequency domain. This affords powerful analytical results that enable us to exploit the multiplication/derivative property of Fourier transforms, namely manipulation of expressions involving the derivatives of our critic $\hat{Q}$ in the action-spatial domain are represented simply by factors of $(i\bm{\omega})$ in the frequency domain. We apply this elegant property in \Cref{freq} to demonstrate the relative ease of manipulation of the inner integral $\hat{I}_\theta$. In \Cref{estimators}, we show that existing Monte Carlo policy gradient estimators arise from our theorem as a single family of cases using different factors $(i\bm{\omega})$ multiplied with the critic $\hat{Q}$.

Moreover, our theorems rely only on the characteristic function $\mathcal{F}(\tilde{\beta})$. While the original technique developed for EPG \citep{epg-journal} relies on the moment generating function to obtain $\hat{I}$ for a policy from an exponential family and a polynomial family of critics, we require only that both the policy PDF and the critic have a closed form Fourier transform \cite{Karr1993}. For policies, this condition is easy to satisfy since almost all common distributions have a closed form characteristic function.

\begin{theorem}[Fourier Policy Gradients]
Let $\hat{I}_\theta(\bm{s}_t)= \nabla_{\theta}\int_{a}{\hat{Q}(\bm{s}_t,\bm{a})\beta_{\theta}(\bm{a}|\bm{s}_t)}d\bm{a}$ be the inner integral of the policy gradient for a critic $\hat{Q}(\bm{a})$ and policy $\beta(\bm{a})$ with auxiliary policy $\tilde{\beta}(\bm{\mu} - \bm{a}) = \beta(\bm{a})$. We may write $\hat{I_{\theta}}(\bm{s}_t)$ as:
\begin{align}
\label{eqmain}\hat{I_{\theta}}(\bm{s}_t)&=\nabla_{\theta}\mathcal{F}^{-1}\Big(\mathcal{F}(\hat{Q})\mathcal{F}(\tilde{\beta})\Big)(\bm{\mu}).
\end{align}
\end{theorem}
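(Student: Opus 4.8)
The plan is to recognise the inner expectation as a convolution evaluated at a single point and then invoke the convolution and inversion properties of the Fourier transform collected in \Cref{sec-fa}. Starting from the bracketed term $E(\bm{s}_t) = \int_a \hat{Q}(\bm{s}_t,\bm{a})\,\beta_{\theta}(\bm{a}|\bm{s}_t)\,d\bm{a}$ identified in \eqref{eq-iqm}, I would substitute the defining relation of the auxiliary policy, $\beta(\bm{a}) = \tilde{\beta}(\bm{\mu} - \bm{a})$, to get $E(\bm{s}_t) = \int_a \hat{Q}(\bm{a})\,\tilde{\beta}(\bm{\mu} - \bm{a})\,d\bm{a}$. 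Comparing with the definition $(f*g)(\bm{x}) = \int_{x'} f(\bm{x}')g(\bm{x}-\bm{x}')\,d\bm{x}'$ under the identifications $f = \hat{Q}$, $g = \tilde{\beta}$, integration variable $\bm{x}' = \bm{a}$, and evaluation point $\bm{x} = \bm{\mu}$, this is exactly $(\hat{Q} * \tilde{\beta})(\bm{\mu})$. The sole purpose of introducing $\tilde{\beta}$ is to absorb the argument reversal $\bm{\mu} - \bm{a}$ demanded by the convolution kernel, so that the expectation is read off as a convolution at the location $\bm{\mu}$.

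Next I would move to the frequency domain. By the inversion identity \eqref{eq-ft-inv-ft}, any function equals $\IFT{\FT{\cdot}}$ of itself, so $\hat{Q} * \tilde{\beta} = \IFT{\FT{\hat{Q} * \tilde{\beta}}}$. Applying the convolution property \eqref{ft-convolution}, namely $\FT{\hat{Q} * \tilde{\beta}} = \FT{\hat{Q}}\FT{\tilde{\beta}}$, turns the convolution into a pointwise product and gives $\hat{Q} * \tilde{\beta} = \IFT{\FT{\hat{Q}}\FT{\tilde{\beta}}}$. Evaluating at $\bm{\mu}$ yields $E(\bm{s}_t) = \IFT{\FT{\hat{Q}}\FT{\tilde{\beta}}}(\bm{\mu})$. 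Since the differentiation in \eqref{eq-iqm} was already pulled outside the expectation, reattaching $\nabla_{\theta}$ to both sides of this equality in $\theta$ is immediate and produces \eqref{eqmain} with no further manipulation of the gradient; in particular there is no need to worry about how $\nabla_{\theta}$ interacts with the evaluation point, because the two sides agree as functions of $\theta$ before the gradient is taken.

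The main obstacle is not the algebra but justifying that each Fourier-analytic step is well defined for the function classes of interest. The critics we treat (trigonometric functions and RBFs) are not absolutely integrable, so $\FT{\hat{Q}}$ exists only as a tempered distribution -- witness the sinusoid example in \Cref{sec-fa}, whose transform is a pair of Dirac spikes -- and both \eqref{ft-convolution} and \eqref{eq-ft-inv-ft} must therefore be read in the distributional sense rather than for classical $L^1$ functions. I would consequently carry the standing hypothesis, flagged immediately before the theorem, that $\hat{Q}$ and $\tilde{\beta}$ both admit closed-form Fourier transforms, which guarantees that the product $\FT{\hat{Q}}\FT{\tilde{\beta}}$ is a well-defined object and that its inverse transform recovers the convolution. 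A secondary point to verify is that the auxiliary policy exists in general: for any reference $\bm{\mu}$ one may simply set $\tilde{\beta}(\bm{z}) \triangleq \beta(\bm{\mu} - \bm{z})$, which reproduces $\tilde{\beta}(\bm{\mu} - \bm{a}) = \beta(\bm{a})$, so the construction imposes no restriction on the policy beyond transformability and explains why the result applies to essentially arbitrary policies.
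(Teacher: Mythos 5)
Your proposal is correct and takes essentially the same route as the paper's own proof: you recognise $E(\bm{s}_t)$ as the convolution $(\hat{Q}*\tilde{\beta})(\bm{\mu})$ via the auxiliary policy, apply the convolution property \eqref{ft-convolution} and the inversion identity \eqref{eq-ft-inv-ft} to obtain $\mathcal{F}^{-1}\big(\mathcal{F}(\hat{Q})\mathcal{F}(\tilde{\beta})\big)(\bm{\mu})$, and reattach $\nabla_\theta$ exactly as the paper does. Your additional remarks on the distributional reading of the transforms for non-integrable critics and the explicit construction $\tilde{\beta}(\bm{z}) \triangleq \beta(\bm{\mu}-\bm{z})$ are sound refinements that the paper leaves implicit.
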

\begin{proof}
Recall the definition of $\hat{I_{\theta}}(\bm{s}_t)$ from \eqref{eq-iqm}: 
\begin{gather}
\label{eq-I_theta}
\hat{I_{\theta}}(\bm{s}_t)=\nabla_{\theta}\int_{a}\hat{Q}(\bm{s}_t,\bm{a}) \beta d\bm{a}.
\end{gather}
To exploit the convolution property of Fourier transforms given by \eqref{ft-convolution}, the first step is to introduce an auxiliary policy $\tilde{\beta}$, so that the above integral becomes a convolution. If the mean of the policy $\beta$ is $\bm{\mu}$, the new auxiliary policy $\tilde{\beta}$ is:
\begin{gather}
\tilde{\beta}(\bm{\mu} - \bm{a}) = \beta(\bm{a}). 
\end{gather}
We start by rewriting $\hat{I_{\theta}}$ as:
\begin{gather}
\label{pg-conv}
\hat{I_{\theta}}= \nabla_{\theta}\int_{a} \hat{Q}(\bm{a}) \tilde{\beta}(\bm{\mu} - \bm{a}) d\bm{a} = \nabla_{\theta}(\hat{Q} * \tilde{\beta}) (\bm{\mu}).
\end{gather}
Now, we apply the Fourier transform to the convolution and use \eqref{ft-convolution} to reduce it to a multiplication:
\begin{align}
\mathcal{F}(\hat{Q} * \tilde{\beta})(\bm{\omega}) &=\mathcal{F}\bigg(\int_{a}\hat{Q}(\bm{a})\tilde{\beta}(\bm{\mu}-\bm{a}))d\bm{a}\bigg)(\bm{\omega}), \\
&=\Big(\mathcal{F}(\hat{Q}) \mathcal{F}(\tilde{\beta})\Big)(\bm{\omega}). 
\end{align}
Taking the inverse Fourier transform gives:
\begin{align}
(\hat{Q} * \tilde{\beta})(\bm{\mu}) &= \mathcal{F}^{-1}\Big(\mathcal{F}(\hat{Q}) \mathcal{F}(\tilde{\beta})\Big)(\bm{\mu}).
\end{align}
Substituting this into \eqref{pg-conv} yields our main result:
\begin{align}
\hat{I}_{\theta}(\bm{s}) &= \nabla_{\theta}\mathcal{F}^{-1}\Big(\mathcal{F}(\hat{Q}) \mathcal{F}(\tilde{\beta})\Big)(\bm{\mu}).
\end{align}
\end{proof}

We now derive a variant of our main theorem for the special case of $\bm{\mu}$.

\begin{theorem}[Fourier Policy Gradients for $\bm{\mu}$]
\label{main1}
Let $\hat{I}_{\bm{\mu}}(\bm{s}_t)=\nabla_{\bm{\mu}}\int_{a}{\hat{Q}(\bm{s}_t,\bm{a})\beta_{\theta}(\bm{a}|\bm{s}_t)}d\bm{a}$ be the inner integral of the policy gradient for $\bm{\mu}$ with a critic $\hat{Q}(\bm{a})$ and policy $\beta(\bm{a})$ with auxiliary policy $\tilde{\beta}(\bm{\mu} - \bm{a}) = \beta(\bm{a})$. We may write $\hat{I}_{\bm{\mu}}(\bm{s}_t)$ as:
\begin{align}
\label{four-mu}\hat{I}_{\bm{\mu}}(\bm{s}_t)=\mathcal{F}^{-1}\Big(\mathcal{F}(\hat{Q})i\bm{\omega}\mathcal{F}(\nabla\tilde{\beta})\Big)(\bm{\mu}).
\end{align}
\end{theorem}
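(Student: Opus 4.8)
The plan is to specialise the master identity~\eqref{eqmain} to the case $\theta=\bm{\mu}$ and then trade the resulting outer gradient for a multiplicative factor in the frequency domain. Substituting $\theta=\bm{\mu}$ into~\eqref{eqmain} gives $\hat{I}_{\bm{\mu}}(\bm{s}_t)=\nabla_{\bm{\mu}}\,\mathcal{F}^{-1}\big(\mathcal{F}(\hat{Q})\mathcal{F}(\tilde{\beta})\big)(\bm{\mu})$. The first point I would establish is that the auxiliary policy $\tilde{\beta}$, being the policy recentred to zero mean via $\tilde{\beta}(\bm{\mu}-\bm{a})=\beta(\bm{a})$, carries no dependence on $\bm{\mu}$ as a density when $\bm{\mu}$ is a pure location parameter; the only $\bm{\mu}$-dependence is then the point at which the convolution $(\hat{Q}*\tilde{\beta})$ is evaluated. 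Consequently the outer gradient acts purely on the evaluation point of the fixed function $g\triangleq\mathcal{F}^{-1}\big(\mathcal{F}(\hat{Q})\mathcal{F}(\tilde{\beta})\big)$, so that $\hat{I}_{\bm{\mu}}(\bm{s}_t)=(\nabla g)(\bm{\mu})$.

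The core step is to invoke the differentiation/multiplication property~\eqref{md-vector-result}. Because $\mathcal{F}(g)=\mathcal{F}(\hat{Q})\mathcal{F}(\tilde{\beta})$, the identity $i\bm{\omega}\,\mathcal{F}(g)=\mathcal{F}(\nabla g)$ lets me pull the gradient inside the inverse transform as a factor $i\bm{\omega}$, yielding $\hat{I}_{\bm{\mu}}(\bm{s}_t)=\mathcal{F}^{-1}\big(i\bm{\omega}\,\mathcal{F}(\hat{Q})\mathcal{F}(\tilde{\beta})\big)(\bm{\mu})$. Applying the same property in the form $i\bm{\omega}\,\mathcal{F}(\tilde{\beta})=\mathcal{F}(\nabla\tilde{\beta})$ then lets me record the derivative of the auxiliary policy explicitly, arriving at the stated form~\eqref{four-mu}. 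I would also note the equivalent and arguably more direct route of differentiating under the integral in $\int \hat{Q}(\bm{a})\tilde{\beta}(\bm{\mu}-\bm{a})\,d\bm{a}$, which sends $\tilde{\beta}$ to $\nabla\tilde{\beta}$ before transforming; the two routes agree by the convolution property~\eqref{ft-convolution}, and I would present whichever reads more cleanly.

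The main obstacle I anticipate is analytic rather than algebraic: justifying the interchange of $\nabla_{\bm{\mu}}$ with the inverse Fourier transform, equivalently differentiation under the integral sign. This needs integrability and decay conditions on $\hat{Q}$, $\tilde{\beta}$, and their transforms so that a dominated-convergence argument applies and so that $i\bm{\omega}\,\mathcal{F}(\tilde{\beta})$ is itself integrable; the standing assumption that both critic and policy admit closed-form Fourier transforms is precisely what I would cite to discharge these hypotheses. A secondary bookkeeping point is to keep the vector-valued transforms consistent, so that the factor $i\bm{\omega}$ and the vector $\mathcal{F}(\nabla\tilde{\beta})$ combine under the conventions set out in~\Cref{multi-fourier}.
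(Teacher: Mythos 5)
Your proposal is correct, and in substance it matches the paper's proof: both arguments rest on the observation that $\bm{\mu}$ enters only as the evaluation point of the convolution $(\hat{Q}*\tilde{\beta})$, combined with the convolution theorem \eqref{ft-convolution} and the multiplication/derivative property \eqref{md-vector-result}. The only difference is order of operations: the paper keeps the gradient inside the integral from the start, uses the chain rule to write $\nabla_{\bm{\mu}}\big(\tilde{\beta}(\bm{\mu}-\bm{a})\big)=\big(\nabla\tilde{\beta}\big)(\bm{\mu}-\bm{a})$, and then transforms the convolution $\hat{Q}*\nabla\tilde{\beta}$ --- precisely the ``more direct route'' you mention as an alternative --- whereas your primary route specialises \eqref{eqmain} to $\theta=\bm{\mu}$ and then commutes $\nabla_{\bm{\mu}}$ past $\mathcal{F}^{-1}$ as a factor $i\bm{\omega}$. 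These are equivalent via $\nabla(\hat{Q}*\tilde{\beta})=\hat{Q}*\nabla\tilde{\beta}$, as you note. Your explicit remark that this requires $\tilde{\beta}$ to carry no $\bm{\mu}$-dependence as a function (i.e., that $\bm{\mu}$ is a pure location parameter) is a hypothesis the paper uses silently --- its chain-rule step needs it just as much --- so flagging it, along with the regularity needed to differentiate through the transform, strengthens rather than departs from the paper's argument.

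One caution: you should not claim to arrive at the \emph{literal} form \eqref{four-mu}. As printed, that display contains both the factor $i\bm{\omega}$ and $\mathcal{F}(\nabla\tilde{\beta})$, which double-counts the derivative: since $\mathcal{F}(\nabla\tilde{\beta})=i\bm{\omega}\,\mathcal{F}(\tilde{\beta})$ by \eqref{md-vector-result}, the printed right-hand side equals $\mathcal{F}^{-1}\big(\mathcal{F}(\hat{Q})(i\bm{\omega})(i\bm{\omega})\mathcal{F}(\tilde{\beta})\big)(\bm{\mu})$, a second-derivative quantity. Your algebra correctly produces
\begin{align}
\hat{I}_{\bm{\mu}}(\bm{s}_t) &= \mathcal{F}^{-1}\Big(\mathcal{F}(\hat{Q})\, i\bm{\omega}\, \mathcal{F}(\tilde{\beta})\Big)(\bm{\mu}) \\
&= \mathcal{F}^{-1}\Big(\mathcal{F}(\hat{Q})\, \mathcal{F}(\nabla\tilde{\beta})\Big)(\bm{\mu}),
\end{align}
which is exactly what the paper's own proof concludes with and what \eqref{four-gauss-mu} subsequently uses; the theorem statement (and one intermediate line of the paper's proof) carries a typo. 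So your final substitution step should \emph{replace} $i\bm{\omega}\,\mathcal{F}(\tilde{\beta})$ by $\mathcal{F}(\nabla\tilde{\beta})$, not introduce $\mathcal{F}(\nabla\tilde{\beta})$ alongside a retained factor of $i\bm{\omega}$. With that correction made explicit, your derivation is sound.
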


\begin{proof}
We return to \eqref{pg-conv}, retaining the derivative inside the integral: 
\begin{gather}
\hat{I}_{\bm{\mu}}(\bm{s})= \int_{a} \hat{Q}(\bm{a})\nabla_{\bm{\mu}} \tilde{\beta}(\bm{\mu} - \bm{a}) d\bm{a}.
\end{gather}
From the chain rule, we substitute $\nabla_{\bm{\mu}}\Big(\tilde{\beta}(\bm{\mu} - \bm{a})\Big)=\Big(\nabla\tilde{\beta}\Big)(\bm{\mu} - \bm{a})$, yielding:
\begin{gather}
\label{intermediate4}
\hat{I}_{\bm{\mu}}(\bm{s})=\int_{a} \hat{Q}(\bm{a})\Big(\nabla\tilde{\beta}\Big)(\bm{\mu} - \bm{a}) d\bm{a}=(\hat{Q}*\nabla\tilde{B})(\bm{\mu}).
\end{gather}
Now, we take the Fourier transforms of the convolution $(\hat{Q}*\nabla\tilde{B})(\bm{\mu})$ and exploit the multiplication property of \eqref{ft-convolution}:
\begin{align}
\label{intermediate2} 
\mathcal{F}((\hat{Q}*\nabla\tilde{B}))(\bm{\omega})&=\Big(\mathcal{F}(\hat{Q}) \mathcal{F}(\nabla\tilde{\beta})\Big)(\bm{\omega}).\\
\end{align}
Using the multiplication/derivative property from \eqref{md-vector-result}, we substitute for $\mathcal{F}(\nabla\tilde{\beta})=i\bm{\omega}\mathcal{F}(\tilde{\beta})$:
\begin{align}
\mathcal{F}((\hat{Q}*\nabla\tilde{B}))(\bm{\omega})&=\Big(\mathcal{F}(\hat{Q})i\bm{\omega}\mathcal{F}(\nabla\tilde{\beta})\Big)(\bm{\omega}).\\
\end{align}
Finally, taking inverse Fourier transforms and substituting into \eqref{intermediate4} yields our result:
\begin{gather}
\hat{I_{\theta}}(\bm{s})= \mathcal{F}^{-1}\Big(\mathcal{F}(\hat{Q}) \mathcal{F}(\nabla\tilde{\beta})\Big)(\bm{\mu}).
\end{gather}
\end{proof}
We use \Cref{main1} to derive the following corollary, valid for all parameters $\psi$ s.t.\ $\bm{\mu}$ does not depend upon them.

\begin{corollary}
Let $\psi$ be a parameter that does not depend upon $\bm{\mu}$. We can write $\hat{I}_\psi(\bm{s}_t)=\nabla_\psi\int_{a}{\hat{Q}(\bm{s}_t,\bm{a})\beta_{\theta}(\bm{a}|\bm{s}_t)}d\bm{a}$ as:
 \begin{gather}
 \label{eq-four}
 \hat{I}_{\psi}(\bm{s})=\mathcal{F}^{-1}\Big(\mathcal{F}(\hat{Q}) \nabla_{\psi}\mathcal{F}(\tilde{\beta})\Big)(\bm{\mu}).
 \end{gather}
\end{corollary}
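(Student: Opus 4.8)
The plan is to start from the main theorem (Fourier Policy Gradients), specialised to the parameter $\psi$, which already gives
\[
\hat{I}_\psi(\bm{s}_t) = \nabla_\psi \mathcal{F}^{-1}\Big(\mathcal{F}(\hat{Q})\mathcal{F}(\tilde{\beta})\Big)(\bm{\mu}),
\]
and then to push the derivative $\nabla_\psi$ inside the inverse transform so that it lands on $\mathcal{F}(\tilde{\beta})$ alone. Concretely, I would expand the inverse Fourier transform using its definition,
\[
\mathcal{F}^{-1}\Big(\mathcal{F}(\hat{Q})\mathcal{F}(\tilde{\beta})\Big)(\bm{\mu}) = \frac{1}{(2\pi)^n}\int_\omega \mathcal{F}(\hat{Q})(\bm{\omega})\,\mathcal{F}(\tilde{\beta})(\bm{\omega})\, e^{i\bm{\omega}^\top\bm{\mu}}\,d\bm{\omega},
\]
and differentiate this expression with respect to $\psi$.

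The key observation, and the place where the hypothesis of the corollary is used, is that only $\mathcal{F}(\tilde{\beta})$ carries any dependence on $\psi$. The factor $\mathcal{F}(\hat{Q})$ is independent of all policy parameters since the critic does not depend on them (this is exactly the fact exploited in \eqref{eq-iqm}), and the exponential kernel $e^{i\bm{\omega}^\top\bm{\mu}}$ is independent of $\psi$ precisely because $\bm{\mu}$ does not depend on $\psi$ by assumption. Hence, after interchanging differentiation and integration,
\[
\nabla_\psi \frac{1}{(2\pi)^n}\int_\omega \mathcal{F}(\hat{Q})\,\mathcal{F}(\tilde{\beta})\, e^{i\bm{\omega}^\top\bm{\mu}}\,d\bm{\omega} = \frac{1}{(2\pi)^n}\int_\omega \mathcal{F}(\hat{Q})\,\nabla_\psi\mathcal{F}(\tilde{\beta})\, e^{i\bm{\omega}^\top\bm{\mu}}\,d\bm{\omega},
\]
which I would recognise as $\mathcal{F}^{-1}\big(\mathcal{F}(\hat{Q})\nabla_\psi\mathcal{F}(\tilde{\beta})\big)(\bm{\mu})$, completing the derivation.

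The one step requiring care is the interchange of $\nabla_\psi$ and the $\bm{\omega}$-integral, i.e.\ differentiation under the integral sign; this needs a dominated-convergence-type justification (integrability of $\mathcal{F}(\hat{Q})\nabla_\psi\mathcal{F}(\tilde{\beta})$ uniformly in a neighbourhood of $\psi$), which I expect to be the main, though largely technical, obstacle and which I would treat as a standing regularity assumption consistent with the closed-form Fourier transform conditions invoked earlier. It is also worth contrasting this result with \Cref{main1}: there, differentiating with respect to $\bm{\mu}$ moves the evaluation point of the inverse transform and, via the differentiation/multiplication property \eqref{md-vector-result}, brings down a factor of $i\bm{\omega}$; here, because $\psi$ leaves $\bm{\mu}$ fixed, no such factor appears and the derivative simply acts on the characteristic function $\mathcal{F}(\tilde{\beta})$.
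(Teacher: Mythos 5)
Your proposal is correct and takes essentially the same route as the paper's proof: start from the main theorem's expression $\hat{I}_\psi(\bm{s}_t)=\nabla_\psi\mathcal{F}^{-1}\big(\mathcal{F}(\hat{Q})\mathcal{F}(\tilde{\beta})\big)(\bm{\mu})$ and move the derivative inside the inverse transform by differentiation under the integral sign (the paper simply cites Leibniz's rule), so that it lands on $\mathcal{F}(\tilde{\beta})$ alone. You are in fact somewhat more explicit than the paper, which does not spell out that the hypothesis on $\bm{\mu}$ is precisely what keeps the kernel $e^{i\bm{\omega}^\top\bm{\mu}}$ fixed under $\nabla_\psi$, nor flag the integrability condition justifying the interchange.
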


The required auxiliary policy $\tilde{\beta}(\bm{a})=\beta(\bm{\mu}-\bm{a})$ exists for all distributions $\beta$ with unbounded support. For symmetric distributions, $\tilde{\beta}$ often has a  convenient form, e.g., for a Gaussian policy $\beta = \mathcal{N}(\bm{\mu}, \bm{\Sigma})$, $\tilde{\beta}=\mathcal{N}(\bm{0}, \bm{\Sigma})$. This transformation is similar to reparameterisation \citep{svg}. For critics, we discuss tractable critic families in the remainder of the paper.   

\section{Applications}
\label{sec-applications}
We now discuss a number of specialisations of \eqref{eqmain} and \eqref{four-mu}, linking them to several established policy gradient approaches. 

\subsection{Frequency Domain Analysis}
\label{freq}

We now motivate the remainder of this section by considering the Gaussian policy $\tilde{\beta} = \mathcal{N}(\bm{0}, \bm{\Sigma})$. We need to calculate the gradient w.r.t. $\bm{\Sigma}^{\frac{1}{2}}$ where $(\bm{\Sigma}^{\frac{1}{2}})^\top\bm{\Sigma}^{\frac{1}{2}}=\bm{\Sigma}$. From the characteristic function for a multivariate Gaussian, $\mathcal{F}(\mathcal{N}(\bm{0}, \bm{\Sigma}))=e^{-\frac{1}{2}\bm{\omega}^\top\bm{\Sigma}\bm{\omega}}$, we find derivatives as:
\begin{align}
\nabla_{\bm{\Sigma}^{\frac{1}{2}}}\mathcal{F}(\tilde{\beta})&=\nabla_{\bm{\Sigma}^{\frac{1}{2}}}e^{-\frac{1}{2}\bm{\omega}^\top\bm{\Sigma}\bm{\omega}},\\
&=-\frac{1}{2}\bm{\Sigma}^{\frac{1}{2}}\Big(2\bm{\omega}\bm{\omega}^\top e^{-\frac{1}{2}\bm{\omega}^\top\bm{\Sigma}\bm{\omega}}\Big),\\
&=\bm{\Sigma}^{\frac{1}{2}}(i\bm{\omega})(i\bm{\omega})^\top\mathcal{F}(\tilde{\beta}).
\end{align}
Substituting for $\nabla_{\psi=\bm{\Sigma}^{\frac{1}{2}}}\mathcal{F}(\tilde{\beta})$ in \eqref{eq-four} gives the gradient for $\bm{\Sigma}^{ \frac{1}{2} }$. For completeness, we also include the update for $\bm{\bm{\mu}}$ which, recall from \eqref{four-mu}, is the same for all policies with auxiliary function $\tilde{\beta}(\bm{a})=\beta(\bm{\mu}-\bm{a})$.
\begin{align}
\label{four-gauss-mu}\hat{I}_{\bm{\mu}} &= \mathcal{F}^{-1}\Big(\mathcal{F}(\hat{Q})(i\bm{\omega})\mathcal{F}(\tilde{\beta})\Big)(\bm{\mu}),\\
\label{four-gauss-sig}\hat{I}_{\bm{\Sigma}^{\frac{1}{2}}}&= \mathcal{F}^{-1}\Big(\bm{\Sigma}^{\frac{1}{2}}\mathcal{F}(\hat{Q})(i\bm{\omega})(i\bm{\omega})^\top\mathcal{F}(\tilde{\beta})\Big).
\end{align}

We see from \eqref{md-vector-result} and \eqref{md-matrix-result} that the terms $i \bm{\omega}$, once pulled into the Fourier transform, become differentiation operators. However, \eqref{four-gauss-mu} and \eqref{four-gauss-sig} afford us a choice -- we can pull them into the critic term or the policy term. This gives rise to a number of different expressions for the gradient. To differentiate between methods, we define the order of the method, denoted by $M$, the order of the derivative with respect to the critic.

We continue our example of Gaussian policies, using $\eqref{four-gauss-mu}$ to compute an update for $\bm{\mu}$ for $M\in\{0,1\}$ and  \eqref{four-gauss-sig} to compute an update for $\Sigma^{1/2}$ for $M\in\{0,1,2\}$. Full derivations with Gaussian derivatives can be found in \Cref{sec-gd}.

\paragraph{Zeroth Order Method ($\bm{M=0}$)}
Using \eqref{four-gauss-mu} and \eqref{four-gauss-sig} in their current form gives an analytic expression for a zeroth order critic, as we do not multiply $\mathcal{F}(\hat{Q})$ by any factor of $i\bm{\omega}$. Using results for multidimensional Fourier transforms from \eqref{md-vector-result} and \eqref{md-matrix-result} when taking these inverse transforms, we obtain:
\begin{align}
\hat{I}_{\bm{\mu}}&=\mathcal{F}^{-1}(\mathcal{F}(\hat{Q})\mathcal{F}(\nabla\tilde{\beta}))=\int_{a}\hat{Q}\nabla \tilde{\beta}d\bm{a} \\ &= -\int_{a}\hat{Q}\nabla \beta d\bm{a}, \label{mu-0} \\
\hat{I}_{\bm{\Sigma}^{\frac{1}{2}}}&=\mathcal{F}^{-1}(\bm{\Sigma}^{\frac{1}{2}}\mathcal{F}(\hat{Q})\mathcal{F}(\nabla^{(2)}\tilde{\beta}))=\bm{\Sigma}^{\frac{1}{2}}\int_{a}\hat{Q}\nabla^{(2)} \tilde{\beta}d\bm{a} \\ &=\bm{\Sigma}^{\frac{1}{2}} \int_{a}\hat{Q}\nabla^{(2)} \beta d\bm{a} \label{sig-0}.
\end{align}
Here, we use the identities $\nabla \tilde{\beta}=-\nabla \beta$ and $\nabla^{(2)} \tilde{\beta}=\nabla^{(2)} \beta$ from \Cref{nth-aux}.

\paragraph{First Order Method ($\bm{M=1}$)} To obtain an analytic expression in terms of $\nabla_{\bm{a}}\hat{Q}$, we must manipulate the factors of $(i\bm{\omega})$ in \eqref{four-gauss-mu} and \eqref{four-gauss-sig} to obtain a factor of $(i\bm{\omega})\hat{Q}$. We then exploit the multidimensional Fourier transform result for vectors from \eqref{md-vector-result} as before:
\begin{align}
\hat{I}_{\bm{\mu}} &= \IFT{ (i\bm{\omega}) \mathcal{F}(\hat{Q})\mathcal{F}(\tilde{\beta}) } = \IFT{\mathcal{F}(\nabla \hat{Q})\mathcal{F}(\tilde{\beta})}\\
&= \int_{a} \beta\nabla \hat{Q} d\bm{a} ,
\label{mu-1}\\
\hat{I}_{\bm{\Sigma}^{\frac{1}{2}}} &= \IFT{\bm{\Sigma}^{\frac{1}{2}}(i\bm{\omega})\mathcal{F}(\tilde{\beta})(i\bm{\omega})^\top\mathcal{F}(\hat{Q})}, \\
&= \IFT{\bm{\Sigma}^{\frac{1}{2}}\mathcal{F}(\nabla \tilde{\beta})\mathcal{F}(\nabla Q)^\top}, \\
&= -\bm{\Sigma}^{\frac{1}{2}}\int_{a}\nabla \beta(\nabla \hat{Q})^\top d\bm{a}. \label{sig-1}
\end{align}

\paragraph{Second Order Method ($\bm{M=2}$)}
We repeat the process, this time taking the derivative of $\hat{Q}$ twice:
\begin{align}
\hat{I}_{\bm{\Sigma}^{\frac{1}{2}}} &=\IFT{\bm{\Sigma}^{\frac{1}{2}}(i\bm{\omega})(i\bm{\omega})^\top\mathcal{F}(\hat{Q})\mathcal{F}(\tilde{\beta})},\\
&=\IFT{\bm{\Sigma}^{\frac{1}{2}}\mathcal{F}(\nabla^{(2)} \hat{Q})\mathcal{F}(\tilde{\beta})} =\bm{\Sigma}^{\frac{1}{2}}  \int_{a}\nabla^{(2)}_{\bm{a}}\hat{Q}\beta d\bm{a}. \label{sig-2}
\end{align}
Here, we exploit the multidimensional Fourier transform result for matrices from \eqref{md-matrix-result} in deriving the second line.

\subsection{Family of SPG Estimators}
\label{estimators}
We are going to revisit certain integrals from \Cref{freq} using the following rule for deriving Monte Carlo estimators:
\[
\int_{a} f(a) da \approx \frac1{\beta(a)} f(a) \quad \text{where} \quad a \sim \beta.
\]
Here, the quantity on the right is a sample-based approximation. We have the following approximations for the integrals given by equations (\ref{mu-0},\ref{sig-0},\ref{mu-1},\ref{sig-1},\ref{sig-2}), which recall were defined for a Gaussian policy $\beta=\mathcal{N}(\bm{\mu},\bm{\Sigma})$:
\begin{align}
&\begin{rcases}
\IP_{\bm{\mu}}&=-\int_{a}\hat{Q}\nabla_{\bm{a}}\beta d{\bm{a}} \approx\bm{\Sigma}^{-1}(\bm{a}_t-\bm{\mu})\hat{Q},\\
\IP_{\bm{\Sigma}^{\frac{1}{2}}}&=\bm{\Sigma}^{\frac{1}{2}}\int_{a}\hat{Q}\nabla^{(2)}_{\bm{a}}\beta d\bm{a},\\
 &\approx\Big((\bm{\Sigma}^{\frac{1}{2}})^{-\top}(\bm{a}_t-\bm{\mu})(\bm{a}_t-\bm{\mu})^\top\bm{\Sigma}^{-1}\\
&\quad-(\bm{\Sigma}^{\frac{1}{2}})^{-\top}\Big)\hat{Q},\\
\end{rcases}
\text{$M=0$}\\[10pt]
&\begin{rcases}
\IP_{\bm{\mu}}&=\int_{a}\nabla_{\bm{a}}\hat{Q}\beta d{\bm{a}} \approx\nabla_{\bm{a}}\hat{Q},\\
\IP_{\bm{\Sigma}^{\frac{1}{2}}}&=-\bm{\Sigma}^{\frac{1}{2}}\int_{a}\nabla_{\bm{a}}\beta(\nabla_{\bm{a}}\hat{Q})^\top d\bm{a},\\
&\approx(\bm{\Sigma}^{\frac{1}{2}})^{-\top}(\bm{a}_t-\bm{\mu})(\nabla_{\bm{a}}\hat{Q})^\top,
\end{rcases}
\text{$M=1$}\\[10pt]
&\begin{rcases}
\IP_{\bm{\Sigma}^{\frac{1}{2}}}&=\bm{\Sigma}^{\frac{1}{2}}\int_{a}\nabla^{(2)}_{\bm{a}}\hat{Q}\beta d\bm{a}\\
&\approx\bm{\Sigma}^{\frac{1}{2}}\nabla^{(2)}_{\bm{a}}\hat{Q}.
\end{rcases}
\text{$M=2$}
\end{align}

The above equations summarise existing results for stochastic policy gradients estimators, which are applicable for \textit{any} policy and critic. The zeroth-order results ($M=0$) correspond to standard policy gradient methods \citep{spg, reinforce, scorevi}; the first-order ones ($M=1$) correspond to reparameterisation-based methods \citep{svg, auto_bayes}; and, when applied to a Gaussian policy, the second-order ($M=2$) of the update for $\Sigma$ is a sample-based version of \emph{Gaussian policy gradients} \citep{epg}, a special case of EPG. Note that interpolations between different estimators can also be used \citep{qprop} as a method of reducing variance further. The full derivations for of the derivatives for the multivariate Gaussian are given in Appendix \ref{sec-gd}.

\subsection{Periodic Action Spaces}
\label{sec-periodic}
In some settings, the action space of an MDP is naturally periodic, e.g., when actions specify angles. By using a trigonometric function in the critic, we encode the insight that rotating by $-\pi$ and by $\pi$ leads to similar results, despite the fact that the two points lie on the opposite ends of the action range.

Consider the case where the policy is Gaussian, i.e., $\beta = \mathcal{N}(\BMU, \BSIGMA)$ and the critic $\hat{Q}$ is a trigonometric function of the form 
\[
\hat{Q}(a) = \cos (\BF ^\top \BA - h),
\]
where $\BF \in \mathbb{R}^n, h \in \mathbb{R}$, and $n$ is the dimension of the action space. 

While a policy gradient method involving a critic $\hat{Q}$ of this form superficially resembles approximating the value function with the Fourier basis \citep{fourier_value} for the state space, it is in fact completely different. Indeed, our method uses a Fourier basis to approximate a function of the \emph{action space}, not the \emph{state space}, which often has different structure. The dependence of $\hat{Q}$ on the state can still be completely arbitrary (for example a neural network). 

We seek to find the policy gradient update for this combination of critic and policy. First, we write out their Fourier transforms:
\begin{align}
\mathcal{F}(\hat{Q}) &= (2\pi)^n\left[\frac{e^{-ih}\delta(\BO - \BF) + e^{ih}\delta(\BO + \BF)}{2} \right] , \\
\mathcal{F}(\tilde{\beta}) &= e^{-\frac12 \BO^\top \Sigma \BO}.
\end{align}
Computing the inverse Fourier transform yields:
\begin{gather}
\label{eq-periodic-j}
\mathcal{F}^{-1}(\mathcal{F}(\hat{Q}) \mathcal{F}(\tilde{\beta}))(\bm{a}) = e^{-\frac12 \BF^\top \BSIGMA \BF} \cos(\BF^\top \BA - h).
\end{gather}
A more detailed derivation of \eqref{eq-periodic-j} can be found in \cref{periodic_critic_derivation}. We now use \eqref{eqmain} to obtain the policy gradients for the mean and the covariance.
\begin{align}
\IP_{\BMU} &= \nabla_{\BMU} \mathcal{F}^{-1}(\mathcal{F}(\hat{Q}) \mathcal{F}(\tilde{\beta}))(\BMU), \\ &= - e^{-\frac12 \BF^\top \BSIGMA \BF} \sin(\BF^\top \BMU - h) \BF, \label{fepg-mu} \\
\IP_{ \BSIGMA } &= \nabla_{\BSIGMA} \mathcal{F}^{-1}(\mathcal{F}(\hat{Q}) \mathcal{F}(\tilde{\beta}))(\BMU), \\ &= - e^{-\frac12 \BF^\top \BSIGMA \BF}  \frac12 \BF \BF^\top \cos(\BF^\top \BMU - h). \\
\end{align}
Intuitively, the mean update contains a frequency damping component $e^{-\frac12 \BF^\top \BSIGMA \BF} $, which is small for large $\BF$, ensuring that the optimisation slows down when the signal is repeating frequently. The covariance update uses the same damping, while also making sure that exploration increases in the minima of the critic and decreases near the maxima, in a way slightly similar, but mathematically different, from Gaussian policy gradients \citep{epg, epg-journal}.

We evaluated a periodic critic of this form on a toy \emph{turntable domain} where the goal is to rotate a flat record to the desired position by rotating it (see Appendix \ref{turntable-setup} for details). We compared it to the DPG baseline from OpenAI \cite{baselines}, which uses a neural network based critic capable of addressing complex control tasks. As expected, the learning curves in Figure \ref{fig-tte} show that using a periodic critic (F-EPG) leads to faster learning, because  it encodes more information about the action space than a generic neural network. Our method efficiently uses this information in the policy gradient framework by deriving an exact policy gradient update. 

\begin{figure}[tb]
\hspace{-1cm} \includegraphics[width=9cm]{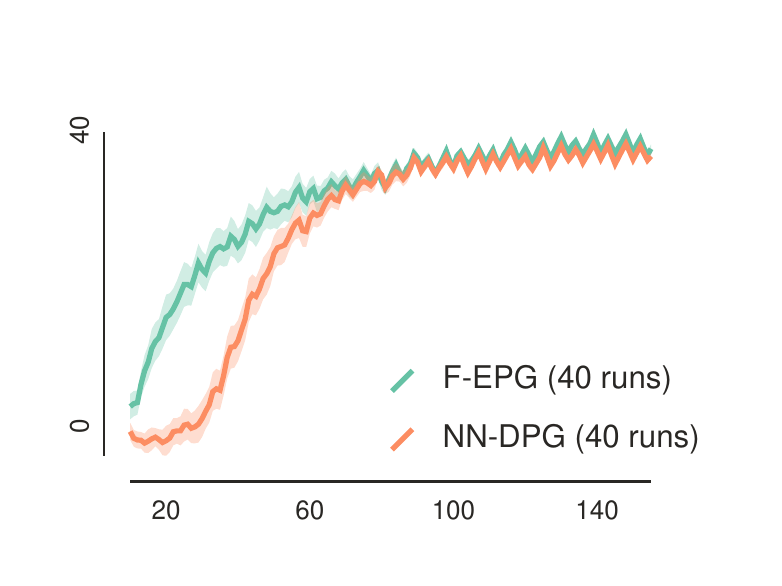}
\centering
\caption{Learning curves for Turntable. EPG with periodic critic (F-EPG) vs.\ DPG with a neural network critic (NN-DPG).}
\label{fig-tte}
\end{figure}

\subsection{Policy Gradients with Radial Basis Functions}
\label{sec-rbf}
\emph{Radial basis functions} (RBFs) \citep{buhmann2003radial} have a long tradition as function approximators in machine learning and combine a simple, tractable structure with the universal approximation property \cite{park1991universal}. In this section, we analyse the elementary RBF building block -- a \emph{single} RBF node.  Results on combining many such blocks are deferred to \Cref{ss-hc}.

Consider the setting where the policy is Gaussian, i.e., $\beta = \mathcal{N}(\BMU, \BSIGMA)$ and the critic is an RBF $\hat{Q} = \mathcal{N}(\BL, \BS)$. Although the critic $\hat{Q}$ has the shape of a Gaussian PDF, it is not a random variable but simply an ordinary function parametrised by the location vector $\BL$ and the positive-definite scale matrix $\BS$, which occupy the place of the mean and the covariance. We want to find the policy gradient updates for the mean and the covariance. We begin the derivation by writing out the Fourier transforms for the policy and the critic:
\begin{align}
\mathcal{F}(\hat{Q}) &= e^{i \BL^\top \BO - \frac12 \BO^\top \BS \BO},\\
\mathcal{F}(\tilde{\beta}) &= e^{- \frac12 \BO^\top \BSIGMA \BO}. 
\end{align}
The inverse Fourier transform has the following form:
\begin{align}
\mathcal{F}^{-1}(\mathcal{F}(\hat{Q}) \mathcal{F}(\tilde{\beta}))(\bm{a}) = \mathcal{N}(\BL, \BSIGMA + \BS)(\bm{a}).
\end{align}
Now, we substitute $\bm{a} = \bm{\mu}$ and introduce the notation:
\begin{gather}
E = \mathcal{N}(\BL, \BSIGMA + \BS)(\BMU).
\end{gather}
We now derive the policy gradients using \eqref{eqmain} and properties of the derivative of the logarithm.
\begin{align}
\IP_\BMU &= \nabla_\BMU E = E \nabla_\BMU \log E, \\ &= - \frac 12 E \nabla_\BMU \| \BMU - \BL \|^2_{(\BSIGMA + \BS)^{-1}}, \label{pg-rbf-mu} \\
\IP_\BSIGMA &= \nabla_\BSIGMA E = E \nabla_\BSIGMA \log E, \\ &= - \frac 12 E \left( \nabla_\BSIGMA \| \BMU - \BL \|^2_{(\BSIGMA + \BS)^{-1}} + \nabla_\BSIGMA \log \det(2 \pi \BSIGMA) \right).
\end{align}
The RBF Policy gradient simply minimises the Mahalanobis distance with the weight matrix $(\BSIGMA + \BS)^{-1}$. Also, since $E$ is a positive scalar, for multi-dimensional action spaces, the multiplication by $E$ in the gradients does not change the gradient direction, only the magnitude.

For the mean update $\nabla_\BMU$, this result is intuitive -- if we want our policy to reach the maximum of the RBF node (i.e., a bump) we simply minimise the distance between the current policy mean and the top of the bump. We now provide an additional variant of this result, based on natural policy gradients. The Fisher matrix for the Gaussian distribution parameterised by $\BMU$ (with the covariance kept constant) is simply $\BSIGMA$, yielding the following update:
\begin{align}
g_{\text{natural}} &= - \frac12 (\BSIGMA) (\BSIGMA + \BS)^{-1} (\BMU - \BL), \\
&= - \frac12 (\BS \BSIGMA^{-1} + \text{Id})^{-1} (\BMU - \BL). 
\end{align}
Here, the symbol $\text{Id}$ denotes the identity matrix. The update given by $g_{\text{natural}}$ can be used in place of $I_\BMU$ to obtain a natural policy gradient method. Moving from a standard first order policy gradient to the natural policy gradient is simply a change in the weighting matrix of the Mahalanobis distance from  $(\BSIGMA + \BS)^{-1}$ to $(\BS \BSIGMA^{-1} + \text{Id})^{-1}$. Furthermore, the Mahalanobis distance reduces to the unweighted $L_2$ distance when $\BS = \BSIGMA$. Intuitively, since the natural policy gradient takes the geometry of the space of distributions into account, a simpler update is obtained if this geometry is the same as the geometry of the RBF (as given by $\BS$). 

\subsection{Revisiting Gaussian Policy Gradients }
\label{sec-gpg}
In this section, we revisit Gaussian policy gradients \citep{epg} with the aim of contrasting it with the RBF derivation presented above. Gaussian policy gradients assume that the policy is Gaussian, i.e., $\beta = \mathcal{N}(\BMU, \BSIGMA)$, and the critic is quadric, i.e.,
\begin{gather}
\hat{Q}(a) = \bm{a}^\top \bm{H} \bm{a} + \bm{a}^\top \bm{b} + \text{const} = (\bm{a} - \BL)^\top \bm{H} (\bm{a} - \BL).
\end{gather}  
Here we denote by `const' some constant which always exists so that the above equality holds for $\BL = -\frac12 \bm{H}^{-1} \bm{b}$. 

In this setting we have that,
\begin{align}
E & = \E_{\beta}\left[ (\bm{a}-\BL)^\top \bm{H} (\bm{a}-\BL) \right], \\ 
& = \text{trace}(\bm{H}\bm{\Sigma}) + (\BMU-\BL)^\top \bm{H} (\BMU-\BL).
\end{align}
Now, we compute the policy gradient for the mean:
\begin{gather}
\IP_\BMU = \nabla_\BMU E = \nabla_\BMU (\BMU-\BL)^\top \bm{H} (\BMU-\BL).
\end{gather}
This is almost the same as \eqref{pg-rbf-mu}, except for a positive scaling factor and the fact that, $\bm{H}$ need not be positive-definite (unlike the matrix $\BS$ from the definition of the RBF.). This illustrates both the similarities and differences between Gaussian policy gradients, which uses quadrics, and the updates given by \eqref{pg-rbf-mu}, which is based on RBFs. The similarity is that we are minimising a quadratic form, while the difference is that the quadratic form used by RBF comes from a more restrictive family (i.e., it has to be positive definite). However, RBFs also have some advantages over quadrics in that they are bounded both from above and below.    

\subsection{Hybrid Critics}
\label{ss-hc}
We now consider the case when the critic $\hat{Q}$ is a linear combination, i.e., $\hat{Q}(\bm{a}) = \sum_i c_i \hat{Q}_i(\bm{a})$ for some $c_i$. The main observation is that the integral $\hat{I}$ is linear in the critic, i.e., for any parameter $\theta$ we have that, 
\begin{gather}
\IP_\theta = \sum_i c_i \{ \IP_\theta \}_i + (\nabla_\theta c_i) E_i, \quad \text{where} \\ E_i = \int_{\bm{a}} \hat{Q}_i \beta(\bm{a}) d \bm{a} \quad \text{and} \quad \{ \IP_\theta \}_i = \nabla_\theta E_i.
\end{gather}

Thus, we can compute the policy gradient update for each component of the critic separately, and then use a linear combination of the updates. If each of these components is in a tractable family, such as a trigonometric function (Section \ref{sec-periodic}), an RBF (Section \ref{sec-rbf}) or a polynomial \citep{epg-journal}, the whole update is also tractable. In this way, we can use critics consisting not just of of a superposition of functions from a single family (like a Fourier basis, which consists of different trigonometric functions), but also hybrid ones, combining functions from many families. 

All these three categories of critics have their corresponding universal approximation result, implying that a linear combination of a sufficient number of functions from that class alone is rich enough to approximate any reasonable function on a bounded interval to arbitrary accuracy. Indeed, we have the Weierstrass theorem about linear combinations of monomials \citep{weierstrass1885analytische,stone1948generalized}, the result by \citet{park1991universal} for linear combinations of RBFs, and the Fourier series approximation for linear combinations of trigonometric functions (see Appendix \ref{fseries}). 

These results show that, in principle, we can have analytic updates for a critic matching \emph{any} $Q$-function, and hence any MDP, without the need for Monte Carlo sampling schemes similar to \eqref{spg-mc}, with no sampling noise (given the state) and with virtually no computational overhead relative to stochastic policy gradients. However, there remain two obstacles. First, for a finite number of basis functions, the approximation may introduce spurious local minima that are harmful to any local optimisation method. Second, even when local minima are not a problem, there is a case for using a degree of sampling in case we believe that our critic $\hat{Q}$ is biased -- some sampling methods allow the use of direct reward rollouts to address bias. We believe that the practical impact of our analytic results and the question of which critic combination to use is yet to be determined.       

\subsection{Mixture Policies and Other Nonstandard Policies}
\label{sec-pi-char}
We now consider mixture policies of the form $\beta(\bm{a}) = \sum_i b'_i \beta_i(\bm{a})$, where  $b'_i \geq 0$ and $\sum_i b'_i = 1$. Similarly to the previous section, we use the linearity of the integral:
\begin{gather}
\label{eq-mixture-pi}
\IP_\theta = \sum_i \{ b'_i \IP_\theta \}_{\beta_i} + (\nabla_\theta b'_i)
E_{\beta_i}, \quad \text{where} \\ E_{\beta_i} = \int_{\bm{a}} \hat{Q}(\bm{a}) \beta_i(\bm{a}) d\bm{a}  
\quad \text{and} \quad \{ \IP_\theta \}_{\beta_i} = \nabla_\theta E_{\beta_i}.
\end{gather}
The two most common types of components for the policy are Gaussian and the deterministic policy (i.e., a Dirac-delta measure). Hence, using \eqref{eq-mixture-pi}, we can obtain a policy gradient method for policies that have several modes (modelled with Gaussians) as well as several focussed (discrete) points. Of course, we can also use any other distribution with a characteristic function by substituting into \eqref{eqmain}. We believe that such policies can be particularly useful in multi-agent settings, where the concept of finding a maximum of the total expected return generalises to finding a Nash equilibrium and it is known that some Nash equilibria admit only stochastic policies \citep{nisan2007algorithmic}. It is also possible to have \emph{both} a mixture policy and a hybrid critic. We do not give the formula, since it is straightforward to derive. 

\section {Conclusions}
This paper developed new theoretical tools for deriving policy gradient updates, showing that expected policy gradients are tractable in three important classes of critics and for almost all policies. We also discussed a framework for deriving estimators for stochastic policy gradients, which generalises existing approaches. Moreover, we addressed the setting of MDPs with periodic action spaces and described an experiment demonstrating the benefits of explicitly modelling periodicity in a policy gradient method.

\section* {Acknowledgements}
This project has received funding from the European Research Council (ERC) under the European Union's Horizon 2020 research and innovation programme (grant agreement number 637713), and the Engineering and Physical Sciences Research Council (EPSRC).

\bibliography{FPG.bib}
\bibliographystyle{icml2018}
 
\newpage
\appendix
\section{Fourier Series and Approximation}
\label{fseries}
Formally, the Fourier series is an expansion of a periodic function $f(x)$ of period $2L$ in terms of an infinite summation of sines and cosines. For clarity, we give the univariate case -- the multivariate result can be found in literature. 
\[
\label{realfourier}f(x)=u_0+\sum_{m=1}^{\infty}u_m\cos(m\omega_0x)+\sum_{m=1}^{\infty}v_m\sin(m\omega_0x),\]
where $\omega_0\triangleq\frac{\pi}{L}$ and the coefficients for the series are:
\begin{align}
u_m&=\frac{1}{L}\int^L_{-L}f(x)\cos(m\omega_0x)dx,\\
v_m&=\frac{1}{L}\int^L_{-L}f(x)\sin(m\omega_0x)dx.
\end{align}
By writing sine and cosine terms in their complex exponential forms, it is possible to define a complex Fourier series for real valued functions as
\begin{align}
\label{complexfourier} f(x)&=\sum_{m=-\infty}^{m=\infty}c_me^{im\omega_0x},\\
c_m&=\frac{1}{2L}\int^L_{-L}f(x')e^{-im\omega_0x'}dx'.
\end{align}
\eqref{realfourier} and \eqref{complexfourier} are equivalent if we set $c_m$ as:
\[
c_m=\begin{cases}
\frac{1}{2}(u_m+iv_m) & \text{for }m<0,\\
u_0 & \text{for }m=0,\\
\frac{1}{2}(u_m-iv_m) & \text{for }m>0.
\end{cases}
\]
In reality, we cannot sum to infinity and instead use the series to approximate $f(x)$ to a finite value of $m$. Just as a Taylor series approximation becomes more accurate by using higher and higher order polynomials $x^m$, a Fourier series expansion becomes more accurate by using sinusoids of higher and higher frequencies $m\omega_0$. However, a Fourier series approximation approximates the function over its whole period, whereas the Taylor series does so only in a local neighbourhood of the given point.

Although the Fourier series is defined for periodic functions, it is still applicable to aperiodic functions. For bounded aperiodic functions, we define the period $2L$ to be the size of the domain of $f(x)$ and then integrate over this domain to obtain the Fourier coefficients. Intuitively, this is equivalent to repeating the bounded function periodically over an infinite domain. Aperiodic functions that are not bounded may be approximated by defining Fourier series over a bounded region of the function. As the size of this bounded region increases, and consequently the period $2L$ increases, the Fourier series approximation becomes more accurate and approaches a Fourier transform. Thus, for aperiodic unbounded functions, a Fourier series approximates a Fourier transform. 

We now formalise the idea of taking the limit of the period going to infinity ($L\rightarrow\infty$) for a complex Fourier series representation of any general function $f(x)$. Firstly, it is convenient to rewrite \eqref{complexfourier} as:
\[
f(x)=\frac{1}{2\pi}\sum_{m=-\infty}^{m=\infty}\int^L_{-L}f(x')e^{-im\omega_0x'}dx'e^{im\omega_0x}\omega_0.
\]
Taking the limit as $L\rightarrow\infty$ \citep{fourieranalysis} gives 
\[
f(x)=\underbrace{\frac{1}{2\pi}\int_{\omega}\bigg( \overbrace{\int_{x'} f(x')e^{-i\omega x'}dx'}^{\FT{f}}\bigg)e^{i\omega x}d\omega}_{\IFT{ \FT{f}}},
\]
which is exactly equivalent to \eqref{eq-ft-inv-ft}.

The integrals in the definition of the Fourier transform arise from taking a Fourier series representation of a function and letting the number of coefficients go to infinity.  

\section{\label{multi-fourier} $\bm{n}$-Dimensional Fourier Transforms}
\paragraph{Definitions}
Firstly, we make the definition of a $n$-dimensional Fourier transform precise: Consider a function $f(\cdot):\mathbb{R}^n\rightarrow\mathbb{R}$. For $\bm{x}=(x_1,x_2,...x_n)^\top\in\mathbb{R}^n$ and $\bm{\omega}=(\omega_1,\omega_2,...\omega_n)^\top\in\mathbb{R}^n$, we have:
\begin{align}
\FT{f(x)}\triangleq&\int_{x} f(\bm{x})e^{-i\bm{\omega}^\top\bm{x}}d\bm{x},\\
&=\overbrace{\int_{x_1}...\int_{x_n}}^{n}f(\bm{x})e^{-i\bm{\omega}^\top\bm{x}}dx_1...dx_n.
\end{align}
The corresponding $n$-Dimensional  inverse Fourier transform is defined as:
\begin{align}
\IFT{f(x)}\triangleq&\Big(\frac{1}{2\pi}\Big)^n\int_{\omega} f(\bm{x})e^{i\bm{\omega}^\top\bm{x}}d\bm{\omega},\\
&=\Big(\frac{1}{2\pi}\Big)^n\overbrace{\int_{\omega_1}...\int_{\omega_n}}^{n}f(\bm{x})e^{i\bm{\omega}^\top\bm{\omega}}d\omega_1...d\omega_n.
\end{align}
We define the Fourier transform of a vector/matrix quantity as simply the Fourier transform of individual elements of the vector/matrix. For example, the Fourier transform of matrix  $\big[\bm{F}(\bm{x})\big]_{jk}=f_{jk}(\bm{x})$ is found from:
\begin{align}
\label{vectormatrix-four}\Big[\mathcal{F}(\bm{F}(\bm{x}))\Big]_{jk}\triangleq\FT{f_{jk}(\bm{x})}.
\end{align}
And similarly for the inverse Fourier transform:
\begin{align}
\Big[\mathcal{F}^{-1}(\bm{F}(\bm{x}))\Big]_{jk}\triangleq\IFT{f_{jk}(\bm{x})}.
\end{align}

\paragraph{Multiplication-Derivative Identities}
We now derive multi-dimension analogues to the single dimension multiplication-derivative property, which we state here:
\begin{gather}
\label{eq-ft-der} \FT{\frac{\partial}{\partial x_j}f(\bm{x})}=i\omega_j\mathcal{F}(f(\bm{x})).
\end{gather}
Proofs of \eqref{eq-ft-der} are commonplace in Fourier Analysis references \cite{fourieranalysis}.
We start with a vector identity:

\begin{lemma}[Multiplication-Derivative Property: Vectors]
\label{md-vector}
Given a function $f(\bm{x})$ with Fourier transform $\FT{f(\bm{x})}$, multiplying $\FT{f(\bm{x})}$ by the vector $i\bm{\omega}$ in the frequency domain is equivalent to taking the first order derivative $\nabla_{\bm{x}}f(\bm{x})$ in the action domain, that is:
\begin{gather}
i\bm{\omega}\FT{f(\bm{x})}=\mathcal{F}(\nabla_{\bm{x}}f(\bm{x})).
\end{gather}
\end{lemma}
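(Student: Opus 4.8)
The plan is to reduce the vector identity to the scalar multiplication-derivative property \eqref{eq-ft-der}, which we take as given, by working one coordinate at a time. The key enabling fact is the component-wise definition of the Fourier transform of a vector quantity from \eqref{vectormatrix-four}: the transform of a vector is the vector of transforms of its entries. I would therefore compare the two sides of the claimed identity component by component and show that each pair of components agrees.

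First I would examine the $j$-th component of the right-hand side $\mathcal{F}(\nabla_{\bm{x}} f(\bm{x}))$. Since the $j$-th entry of the gradient is $[\nabla_{\bm{x}} f]_j = \frac{\partial}{\partial x_j} f(\bm{x})$, the component-wise definition \eqref{vectormatrix-four} gives $\big[\mathcal{F}(\nabla_{\bm{x}} f)\big]_j = \mathcal{F}\big(\frac{\partial}{\partial x_j} f(\bm{x})\big)$. Applying the scalar identity \eqref{eq-ft-der} to this single partial derivative yields $\mathcal{F}\big(\frac{\partial}{\partial x_j} f(\bm{x})\big) = i\omega_j\, \mathcal{F}(f(\bm{x}))$.

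Next I would examine the $j$-th component of the left-hand side $i\bm{\omega}\,\mathcal{F}(f(\bm{x}))$. Because $\mathcal{F}(f(\bm{x}))$ is a scalar-valued function of $\bm{\omega}$, multiplying it by the vector $i\bm{\omega}$ simply scales each entry, so the $j$-th component is exactly $i\omega_j\,\mathcal{F}(f(\bm{x}))$. This is identical to the expression obtained for the right-hand side, and since the equality holds for every $j \in \{1,\dots,n\}$, the two vectors coincide, which establishes the lemma.

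There is essentially no deep obstacle here; the argument is a bookkeeping reduction to \eqref{eq-ft-der}. The only point requiring care is the regularity assumed implicitly by the scalar identity: its standard proof integrates by parts along the $x_j$-axis and discards the boundary term, which is valid provided $f$ is differentiable, both $f$ and $\frac{\partial}{\partial x_j} f$ are integrable, and $f(\bm{x}) \to 0$ as $|x_j| \to \infty$. I would state these conditions once (they hold for the policy densities and critics considered in the paper) and then invoke \eqref{eq-ft-der} freely. An alternative, fully self-contained route would bypass \eqref{eq-ft-der} entirely and integrate by parts directly in the $n$-dimensional integral defining $\mathcal{F}\big(\frac{\partial}{\partial x_j} f\big)$, but this merely reproduces the scalar proof coordinate-wise and offers no real advantage.
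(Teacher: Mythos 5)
Your proof is correct and matches the paper's own argument essentially verbatim: both reduce the vector identity componentwise to the scalar property \eqref{eq-ft-der} via the entrywise definition \eqref{vectormatrix-four}, differing only in that you start from the right-hand side rather than the left. Your added remark on the regularity assumptions (integrability and decay needed for the integration by parts underlying \eqref{eq-ft-der}) is a sensible refinement the paper leaves implicit, but it does not change the route.
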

 
\begin{proof}
Consider the elements of the vector $i\bm{\omega}\FT{f(\bm{x})}$:
\begin{gather}
\label{element-vector}\Big[i\bm{\omega}\FT{f(\bm{x})}\Big]_j=i\omega_j\FT{f(\bm{x})}.
\end{gather}
Using the single dimension multiplication-derivative property from \eqref{eq-ft-der} yields:
\begin{gather}
\Big[i\bm{\omega}\FT{f(\bm{x})}\Big]_j=\FT{\frac{\partial}{\partial x_j}f(\bm{x})}.
\end{gather}
Using the definition of the Fourier transform of a vector from \eqref{vectormatrix-four} gives our main result:
\begin{gather}
i\bm{\omega}\FT{f(\bm{x})}=\mathcal{F}(\nabla_{\bm{x}}f(\bm{x})).
\end{gather}
\end{proof}

We now derive a similar identity for matrices:
\begin{lemma}[Multiplication-Derivative Property: Matrices]
\label{md-matrix}
Given a function $f(\bm{x})$ with Fourier transform $\FT{f(\bm{x})}$, multiplying $\FT{f(\bm{x})}$ by the matrix $(i\bm{\omega})(i\bm{\omega})^\top$ in the frequency domain is equivalent to taking the second order derivative $\nabla^{(2)}_{\bm{x}}f(\bm{x})$ in the action domain, that is:
\begin{gather}
(i\bm{\omega})(i\bm{\omega})^\top\FT{f(\bm{x})}=\FT{\nabla^{(2)}_{\bm{x}}f(\bm{x})}.
\end{gather}
\end{lemma}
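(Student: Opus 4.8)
The plan is to mirror the element-wise strategy used in the proof of \Cref{md-vector}, but applying the single-dimension multiplication-derivative property of \eqref{eq-ft-der} twice rather than once. First I would fix indices $j$ and $k$ and examine the $(j,k)$ entry of the matrix on the left-hand side. Since $[(i\bm{\omega})(i\bm{\omega})^\top]_{jk} = (i\omega_j)(i\omega_k)$, this entry is $(i\omega_j)(i\omega_k)\FT{f(\bm{x})}$.

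Next I would peel off one factor using \eqref{eq-ft-der}, which converts multiplication by $i\omega_k$ in the frequency domain into differentiation by $x_k$ in the action domain, giving $(i\omega_j)\FT{\frac{\partial}{\partial x_k}f(\bm{x})}$. Applying \eqref{eq-ft-der} a second time — now with respect to $x_j$, acting on the already-transformed function $\frac{\partial}{\partial x_k}f$ — yields $\FT{\frac{\partial^2}{\partial x_j \partial x_k}f(\bm{x})}$. Thus each entry of the left-hand matrix equals the Fourier transform of the corresponding second partial derivative of $f$.

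Finally I would reassemble the entries: by the definition of the matrix Fourier transform in \eqref{vectormatrix-four}, the collection of entries $\FT{\frac{\partial^2}{\partial x_j \partial x_k}f}$ is exactly $\FT{\nabla^{(2)}_{\bm{x}}f(\bm{x})}$, since the Hessian $\nabla^{(2)}_{\bm{x}}f$ has $(j,k)$ entry $\frac{\partial^2}{\partial x_j \partial x_k}f$. This identifies the two matrices entrywise and completes the argument.

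Structurally this is routine, and the only step requiring care is the second application of \eqref{eq-ft-der}, where I must ensure that differentiating under the integral sign is legitimate — i.e., that $f$ is smooth enough and decays fast enough (e.g. lies in the Schwartz class) that the transforms of $f$ and of its first and second derivatives all exist and that the single-variable identity may be iterated. This is precisely the regularity assumption already implicit in \Cref{md-vector} and in \eqref{eq-ft-der} itself, so the matrix case introduces no new hypotheses; it is merely one further pass of the same commutation of $\FT{\cdot}$ with partial differentiation.
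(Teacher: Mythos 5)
Your proof is correct and follows essentially the same route as the paper's: both compute the $(j,k)$ entry $(i\omega_j)(i\omega_k)\FT{f(\bm{x})}$, apply the scalar identity \eqref{eq-ft-der} twice, and reassemble via the entrywise definition \eqref{vectormatrix-four}. Your added remark on regularity (Schwartz-class decay justifying the iterated commutation of $\mathcal{F}$ with differentiation) is a reasonable precision the paper leaves implicit, but it does not change the argument.
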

 
\begin{proof}
Consider the elements of the matrix $(i\bm{\omega})(i\bm{\omega})^\top\FT{f(\bm{x})}$:
\begin{gather}
\label{element-vector}\Big[(i\bm{\omega})(i\bm{\omega})^\top\FT{f(\bm{x})}\Big]_{jk}=(i\omega_j)(i\omega_k)\FT{f(\bm{x})}.
\end{gather}
Using the single dimension multiplication-derivative property from \eqref{eq-ft-der} twice yields:
\begin{align}
\Big[(i\bm{\omega})(i\bm{\omega})^\top\FT{f(\bm{x})}\Big]_{jk}&=(i\omega_j)\FT{\frac{\partial}{\partial x_k}f(\bm{x})},\\
&=\FT{\frac{\partial^2}{\partial x_j\partial x_k}f(\bm{x})}.
\end{align}
Using the definition of the Fourier transform of a matrix from \eqref{vectormatrix-four} gives our main result:
\begin{gather}
i\bm{\omega}\FT{f(\bm{x})}=\mathcal{F}({\nabla^{(2)}_{\bm{x}}f(\bm{x})}).
\end{gather}
\end{proof}

\section{\label{auxiliary} Auxiliary Function Properties}
\begin{lemma}[$n$th Order Derivative of Auxiliary Function]
\label{nth-aux}
Given an auxiliary function $\tilde{\beta}(\bm{\mu}-\bm{a})=\beta(\bm{a})$ for a policy $\beta$, we may relate the $m$-th order derivative of $\tilde{\beta}$ w.r.t. $\bm{\mu}$ to the $m$th order derivative of $\beta$ w.r.t. $\bm{a}$ as:
\begin{gather}
\Big(\nabla^{(m)}\tilde{\beta}\Big)(\bm{\mu}-\bm{a})=(-1)^n\nabla^{(m)}_{\bm{a}}\beta(\bm{a})\ \ \forall \ \ m\ge0.
\end{gather}
\end{lemma}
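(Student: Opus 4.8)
The plan is to prove the identity by induction on the derivative order, treating the defining relation $\tilde{\beta}(\bm{\mu}-\bm{a})=\beta(\bm{a})$ (equivalently $\tilde{\beta}(\bm{a})=\beta(\bm{\mu}-\bm{a})$, as stated in \Cref{sec-main-result}) as an equality of functions of $\bm{a}$ and differentiating both sides repeatedly with respect to $\bm{a}$. I read the exponent on the right-hand side as $(-1)^m$; the $n$ printed in the statement is a typo for $m$, since $m$ is the order being tracked.

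First I would dispatch the base case $m=0$, which is nothing more than the definition $\tilde{\beta}(\bm{\mu}-\bm{a})=\beta(\bm{a})$ and agrees with $(-1)^0=1$. For the inductive step I would assume $(\nabla^{(m)}\tilde{\beta})(\bm{\mu}-\bm{a})=(-1)^m\nabla^{(m)}_{\bm{a}}\beta(\bm{a})$ and differentiate once more in $\bm{a}$. The right-hand side immediately becomes $(-1)^m\nabla^{(m+1)}_{\bm{a}}\beta(\bm{a})$. On the left, the object is the $m$th derivative tensor of $\tilde{\beta}$ evaluated at the shifted argument $\bm{\mu}-\bm{a}$; by the chain rule, differentiating this composite in $\bm{a}$ produces $(\nabla^{(m+1)}\tilde{\beta})(\bm{\mu}-\bm{a})$ times the Jacobian of the inner map $\bm{a}\mapsto\bm{\mu}-\bm{a}$, which is $-\mathrm{Id}$. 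Matching the two sides gives $-(\nabla^{(m+1)}\tilde{\beta})(\bm{\mu}-\bm{a})=(-1)^m\nabla^{(m+1)}_{\bm{a}}\beta(\bm{a})$, i.e.\ $(\nabla^{(m+1)}\tilde{\beta})(\bm{\mu}-\bm{a})=(-1)^{m+1}\nabla^{(m+1)}_{\bm{a}}\beta(\bm{a})$, which closes the induction.

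The only point that needs genuine care, and the step I would flag as the main potential pitfall, is the multivariate bookkeeping of the chain rule: since $\nabla^{(m)}$ is a symmetric $m$-tensor, one must verify that the Jacobian factor collapses to the scalar $(-1)$ at each order rather than leaving some more elaborate tensor contraction. This works out cleanly precisely because the inner map is the affine involution $\bm{a}\mapsto\bm{\mu}-\bm{a}$, whose Jacobian is $-\mathrm{Id}$, a scalar multiple of the identity; it therefore commutes through every index of the derivative tensor and pulls out as a single $(-1)$ per differentiation, accumulating to $(-1)^m$. Equivalently, to avoid tensor notation entirely, I could fix an arbitrary set of coordinate directions and apply the one-dimensional rule $\frac{d}{dt}g(\mu-t)=-g'(\mu-t)$ to each partial derivative, which makes the accumulation of the sign transparent and reduces the claim to iterating \eqref{eq-ft-der}-style component-wise reasoning.
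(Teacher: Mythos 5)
Your proof is correct and takes essentially the same route as the paper's: the paper establishes the $m=1$ case via the chain rule through the affine map $\bm{a}\mapsto\bm{\mu}-\bm{a}$ (whose Jacobian $-\bm{I}$ contributes the sign) and then appeals to ``taking $m-1$ more derivatives,'' which is precisely the induction you make explicit, with welcome extra care that the Jacobian factor collapses to a scalar $(-1)$ per differentiation in the multivariate tensor setting. You are also right that the exponent $(-1)^n$ in the statement is a typo for $(-1)^m$ --- the paper's own proof concludes with $(-1)^m$, and the identities $\nabla\tilde{\beta}=-\nabla\beta$, $\nabla^{(2)}\tilde{\beta}=\nabla^{(2)}\beta$ used in the main text confirm this.
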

 
\begin{proof}
\textit{For $m=1$}
From the chain rule we write:
\begin{align}
\Big(\nabla\tilde{\beta}\Big)(\bm{\mu}-\bm{a})=\nabla_{\bm{\mu}}\tilde{\beta}(\bm{\mu}-\bm{a}).
\end{align}
Let $\bm{\nu}=\bm{\mu}-\bm{a}$ s.t. $\tilde{\beta}(\bm{\mu}-\bm{a})=\tilde{\beta}(\bm{\nu})$. Using the chain rule again for $\nabla_{\bm{\mu}}\tilde{\beta}(\bm{\mu}-\bm{a})$ yields:
\begin{align}
\nabla_{\bm{\mu}}\tilde{\beta}(\bm{\nu})=\nabla_{\bm{\mu}}\bm{\nu}\nabla_{\bm{\nu}}\bm{a}\nabla_{\bm{a}}\tilde{\beta}(\bm{\nu}).
\end{align}
 Now, $\nabla_{\bm{\mu}}\bm{\nu}=\bm{I}$ and $\nabla_{\bm{\nu}}\bm{a}=-\bm{I}$. Substituting yields:
 \begin{align}
\nabla_{\bm{\mu}}\tilde{\beta}(\bm{\mu}-\bm{a})=(-1)\nabla_{\bm{a}}\tilde{\beta}(\bm{\nu}).
\end{align}
 Substituting $\tilde{\beta}(\bm{\nu})=\tilde{\beta}(\bm{\mu}-\bm{a})=\beta(\bm{a})$ gives our main result for $m=1$:
 \begin{align}
\Big(\nabla\tilde{\beta}\Big)(\bm{\mu}-\bm{a})=(-1)\nabla_{\bm{a}}\beta(\bm{a}).
\end{align}
Finally, taking $m-1$ more derivatives will give our main result:
 \begin{align}
\Big(\nabla^{(m)}\tilde{\beta}\Big)(\bm{\mu}-\bm{a})=(-1)^m\nabla^{(m)}_{\bm{a}}\beta(\bm{a}).
\end{align}
\end{proof}

\section{Turntable Experimental Setup Details}
\label{turntable-setup} 
The turntable domain is a toy continuous control task. The goal is to align a disk to a desired angle by rotating it around its axis. The action is an angle in the range $a \in [ -\pi, \pi ]$ and the observations are the current position of the disk and the target position, both expressed as angles. The reward is set to $ \sin (\alpha + \alpha_{\text{target}}) - \frac14 |a| $. For DPG, we used the OpenAI baseline implementation, where both the actor and the critic are represented using neural networks. For Fourier-EPG, we used the same setup but changed the critic to be trigonometric critic of the form $ \sin(\alpha + \alpha_{\text{target}} - a) + w |a|$ with a tuneable weight $w$ and the actor update given by Equation \eqref{fepg-mu}. The exploration policy was Gaussian with fixed standard deviation $\sigma=0.05$ in both cases.

\section{Gaussian Derivatives}
We derive specific analytical solutions for the Gaussian policy $\beta=\mathcal{N}(\bm{\mu},\bm{\Sigma})$ from \Cref{freq}. The following identities \citep{matrix_cookbook} will be useful:
\begin{align}
\label{d1-gauss}\nabla_{\bm{a}}\beta&=-\bm{\Sigma}^{-1}(\bm{a}-\bm{\mu})\beta,\\
\label{d2-gauss}\nabla^{(2)}_{\bm{a}}\beta&=\Big(\bm{\Sigma}^{-1}(\bm{a}-\bm{\mu})(\bm{a}-\bm{\mu})^\top\bm{\Sigma}^{-1}-\bm{\Sigma}^{-1}\Big)\beta.\\
\end{align}

\paragraph{Zeroth order $\bm{(M=0)}$}
Substituting for $\nabla_{\bm{a}}\beta$ from \eqref{d1-gauss} and $\nabla^{(2)}_{\bm{a}}\beta$ from \eqref{d2-gauss} in \eqref{mu-0} and \eqref{sig-0} respectively, we obtain our analytic expression:
\begin{align}
\hat{I}_{\bm{\mu}}=&\int_a\bm{\Sigma}^{-1}(\bm{a}-\bm{\mu})\hat{Q}\beta d\bm{a},\\
\hat{I}_{\bm{\Sigma}^{\frac{1}{2}}}=&\int_a\Big((\bm{\Sigma}^{\frac{1}{2}})^{-\top}(\bm{a}-\bm{\mu})(\bm{a}-\bm{\mu})^\top\bm{\Sigma}^{-1}\\
&-(\bm{\Sigma}^{\frac{1}{2}})^{-\top}\Big)\hat{Q}\beta d\bm{a}.
\end{align}

\paragraph{First order $\bm{(M=1)}$}
Substituting for $\nabla_{\bm{a}}\beta$ from \eqref{d1-gauss} in \eqref{sig-1}, we obtain our analytic expression:
\begin{align}
\hat{I}_{\bm{\Sigma}^{\frac{1}{2}}}&=\int_a(\bm{\Sigma}^{\frac{1}{2}})^{-\top}(\bm{a}-\bm{\mu})(\nabla_{\bm{a}}\hat{Q})^\top\beta d\bm{a}.
\end{align}
\label{sec-gd}
 
\section{Proofs}
\begin{customcorollary}{2.1}
Let $\psi$ be a parameter that does not depend upon $\bm{\mu}$. We can write $\hat{I}_\psi(\bm{s}_t)=\nabla_\psi\int_{a}{\hat{Q}(\bm{s}_t,\bm{a})\beta_{\theta}(\bm{a}|\bm{s}_t)}d\bm{a}$ as:
 \begin{gather}
 \hat{I}_{\psi}(\bm{s})=\mathcal{F}^{-1}\Big(\mathcal{F}(\hat{Q}) \nabla_{\psi}\mathcal{F}(\tilde{\beta})\Big)(\bm{\mu}).
 \end{gather}
\end{customcorollary}
\begin{proof}
Using \Cref{main1}, we obtain the following expression for $\hat{I}_\psi(\bm{s}_t)$:
\begin{gather}
\hat{I_\psi}(\bm{s}_t)=\nabla_\psi\mathcal{F}^{-1}\Big(\mathcal{F}(\hat{Q})\mathcal{F}(\tilde{\beta})\Big)(\bm{\mu}).
\end{gather}
Using Leibniz's rule for integration under the integral, we move the derivative inside of the inverse Fourier transform, obtaining our result:
\begin{gather}
\hat{I_{\psi}}(\bm{s}_t)=\mathcal{F}^{-1}\Big(\mathcal{F}(\hat{Q})\nabla_{\psi}\mathcal{F}(\tilde{\beta})\Big)(\bm{\mu}).
\end{gather}
\end{proof}

\section{Complete Periodic Critic Derivation}
\label{periodic_critic_derivation}
We now derive the analytic update from \eqref{eq-periodic-j} for our periodic critic. Firstly, for ease of analysis we re-write our critic using the hyperbolic function:
\begin{align}
\hat{Q}(a)=&\cos(\BF^\top\BA-h),\\
=&\frac{e^{i(\BF^\top\BA-h)}+e^{-i(\BF^\top\BA-h)}}{2},\\
=&\frac{e^{-ih}e^{i\BF^\top\BA}+e^{ih}e^{-i\BF^\top\BA}}{2}.
\end{align}
Taking the Fourier transform yields:
\begin{align}
\FT{\hat{Q}}=&\frac12\bigg[e^{-ih}(2\pi)^n\prod_{j=1}^{n}\delta(\omega_j-f_j)\\
&+e^{ih}(2\pi)^n\prod_{j=1}^{n}\delta(\omega_j+f_j)\bigg],\\
=&(2\pi)^n\left[\frac{e^{-ih}\delta(\BO-\BF)+e^{ih}\delta(\BO+\BF)}{2}\right].
\end{align}
Recall that the characteristic function of the Gaussian auxiliary function is $\FT{\tilde{\beta}}=e^{-\BO^\top\BSIGMA\BO}$. Now taking inverse Fourier transforms of $\mathcal{F}(\hat{Q}) \mathcal{F}(\tilde{\beta})$ yields:
 \begin{gather}
\mathcal{F}^{-1}(\mathcal{F}(\hat{Q}) \mathcal{F}(\tilde{\beta}))(\bm{a})=\frac{1}{(2\pi)^n}\int\mathcal{F}(\hat{Q}) \mathcal{F}(\tilde{\beta})e^{i\BO^T\BA}d\BO,\\
=\frac12\int e^{-\BO^\top\BSIGMA\BO} \left[e^{-ih}\delta(\BO-\BF)+e^{ih}\delta(\BO+\BF)e^{i\BO^T\BA}\right]d\BO,\\
=\frac12\int e^{-\BO^\top\BSIGMA\BO}\left[e^{i(\BO^T\BA-h)}\delta(\BO-\BF)+e^{i(\BO^T\BA+h)}\delta(\BO+\BF)\right]d\BO,\\
=e^{-\BF^\top\BSIGMA\BF}\left[\frac{e^{i(\BF^T\BA-h)}+e^{-i(\BF^T\BA-h)}}{2}\right],\\
=e^{-\BF^\top\BSIGMA\BF}\cos(\BF^T\BA-h),\\
\end{gather}
where we have used the hyperbolic definition of $\cos$ to derive our desired result in the final line. 
\end{document}